\algnewcommand\algorithmicforeach{\textbf{for each}}
\newtheorem{proposition}{Proposition}
\newtheorem{theorem}{Theorem}
\newcommand\copyrighttext{%
  \footnotesize \textcopyright 2022 IEEE. Personal use of this material is permitted.
  Permission from IEEE must be obtained for all other uses, in any current or future
  media, including reprinting/republishing this material for advertising or promotional
  purposes, creating new collective works, for resale or redistribution to servers or
  lists, or reuse of any copyrighted component of this work in other works.}
\newcommand\copyrightnoticeee{%
\begin{tikzpicture}[remember picture,overlay]
\node[anchor=south,yshift=10pt] at (current page.south) {\fbox{\parbox{\dimexpr\textwidth-\fboxsep-\fboxrule\relax}{\copyrighttext}}};
\end{tikzpicture}%
}
\title{Robust and efficient change point detection using novel multivariate rank-energy GoF test}
\name{Shoaib Bin Masud and Shuchin Aeron}
\address{Dept. of ECE, Tufts University, Medford, MA, USA, 02155}
\begin{document}
%\ninept
%
\maketitle
\copyrightnoticeee
\begin{abstract}
% The abstract should appear at the top of the left-hand column of text, about
% 0.5 inch (12 mm) below the title area and no more than 3.125 inches (80 mm) in
% length.  Leave a 0.5 inch (12 mm) space between the end of the abstract and the
% beginning of the main text.  The abstract should contain about 100 to 150
% words, and should be identical to the abstract text submitted electronically
% along with the paper cover sheet.  All manuscripts must be in English, printed
% in black ink.
% \begin{itemize}
%     \item \textit{add the figure where about the maximum value between disjoint distribution.}
%     \item \textit{claim sRE is smoother and add a synthetic example.}
%     \item \textit{choose an appropriate dataset.}
%     \item \textit{Find the F1 score, AUC, ROC, and compare with other methods.}
% \end{itemize}
 In this paper, we use and further develop upon a recently proposed multivariate, distribution-free Goodness-of-Fit (GoF) test  based on the theory of Optimal Transport (OT) called the Rank Energy ($\mathsf{RE}$) \cite{deb2021multivariate}, for non-parametric and unsupervised Change Point Detection (CPD) in multivariate time series data. We show that directly using $\mathsf{RE}$ leads to high sensitivity to very small changes in distributions (causing high false alarms) and it requires large sample complexity and huge computational cost. To alleviate these drawbacks, we propose a new GoF test statistic called as soft-Rank Energy ($\mathsf{sRE}$) that is based on entropy regularized OT and employ it towards CPD. We discuss the advantages of using $\mathsf{sRE}$ over $\mathsf{RE}$ and demonstrate that the proposed $\mathsf{sRE}$ based CPD outperforms all the existing methods in terms of Area Under the Curve (AUC) and F1-score on real and synthetic data sets. 
\end{abstract}
\begin{keywords}
multivariate rank, rank energy, soft rank energy, optimal transport, halton points.
\end{keywords}
\section{Introduction}

% \textcolor{blue}{We need to justify why we want to use RE vs other two-sample tests. Look for that discussion in the ArXiv paper if it is there.}
% \label{sec:intro}

A significant part of the multivariate time series analysis deals with the detection of unknown abrupt changes in a temporal signal that represent the transitions from one state to another. This problem, commonly referred to as the change point detection (CPD), has been extensively studied in statistics and machine learning and is found in many real-world applications including the analysis of physiological \cite{qi2014novel}, financial \cite{carvalho2007simulation} and sensor data \cite{he2006nonparametric}. 

Statistical CPD methods can be categorized into different criteria e.g., univariate vs. multivariate, parametric vs. nonparametric. Parametric or model-based methods assume that parameters of the underlying time series data distribution are either known or can be learned from data \cite{chamroukhi2013joint, lee2018time}. Parametric methods are advantageous if either of these assumptions holds true. When the distributions are unknown or learning the parameters from data is difficult, nonparametric methods are desirable. Approximation of divergence based on direct density-ratio estimation \cite{kanamori2009least} and the integral probability metric e.g., maximum mean discrepancy (MMD) \cite{gretton2012kernel} based two-sample multivariate Goodness-of-fit (GoF) test have been proposed in \cite{li2015scan, liu2013change} as the nonparametric ways to detect change points.
% Non-parametric CPD methods typically involves a two-sample Goodness-of-Fit (GoF) test.  \textcolor{blue}{This statement is very badly written $\rightarrow$} Integral probability metric e.g., MMD \cite{gretton2012kernel} based two-sample test or measuring divergence. 
Some other nonparametric GoF tests that have been used for CPD such as Kolmogorov-Smirnov (KS), and Cramer-von-Mises \cite{hawkins2010nonparametric}, are univariate in nature. Recently \cite{cheng2020optimal} used the univariate Wasserstein two-sample test (W2T) based on the theory of OT in one-dimension \cite{ramdas2017wasserstein} for CPD. However, one of the drawbacks of this method is that it projects the data onto several one dimensional directions  and uses the average statistic, which can lead to the loss of detection power.

\textbf{\underline{Main contributions}} - In this work, we propose CPD using recently developed statistics known as the \textit{Rank Energy} ({\small $\mathsf{RE}$}) \cite{deb2021multivariate}. What makes {\small $\mathsf{RE}$} attractive for CPD is that it is a \textit{multivariate rank-based and distribution-free} GoF test \cite{deb2021multivariate}, where the notion of rank is derived leveraging the theory of Optimal Transport (OT) in high dimensions.

However, as we outline in more detail subsequently and as borne out from simulation results, directly using the sample version of {\small $\mathsf{RE}$} for CPD has some drawbacks, namely, high sensitivity to small changes (leading to high false alarm rates), high computational complexity, and large sample complexity. To alleviate these  shortcomings, we propose a new statistic, called \textit{soft-Rank Energy} ({\small $\mathsf{sRE}$}) that leverages the computational and sample efficient entropy regularized OT \cite{peyre2019computational} and exploit it for CPD. We demonstrate the advantages of using {\small $\mathsf{sRE}$} over {\small $\mathsf{RE}$}. We also evaluate the performances of {\small $\mathsf{RE}$} and {\small $\mathsf{sRE}$} on both toy and real datasets and compare them with the existing state-of-the-art (SOTA) methods. \\ 

The rest of the paper is organized as follows- In section \ref{sec:background} we provide necessary background on the multivariate {\small $\mathsf{RE}$} and highlight the pros and cons of using {\small $\mathsf{RE}$} in CPD. In section \ref{sec:proposed}, we then introduce the sample version of {\small $\mathsf{sRE}$} and employ it in CPD. In section \ref{sec:evaluation} we show improved AUC and F1-score for CPD on real datasets compared to state-of-the-art.

\section{Problem set-up}
% \textcolor{blue}{SA: Precisely state the problem set-up along with all the assumptions here. You can take that from Kevin's paper.}

% \textcolor{blue}{Throughout the paper, $X$, $\bm X$ and $\mathbf{X}$ are defined as univariate random variable, a random vector and a matrix respectively. }
\textbf{Notation}: We use bold-math capital letters $\bm{X}$ for multivariate random variables, bold-face capital letters $\mathbf{X}$ for matrices and maps, lower-case bold face bold-math $\bm{x}$ for vectors. We denote by $\mathcal{P}(\mathbb{R}^d)$ the set of probability measures on $\mathbb{R}^d$. The rest of the notation is standard and should be clear from the context. \\
Given a time series $\bm Z[t] \in \mathbb R^d$, $t= 1, 2, \dots$, where the data consists of distinct segments  $[0,\! \tau_1], \!\; [\tau_1\! +\!1, \!\tau_2], \dots, [\tau_{k-1} + 1, \tau_{k}]$ with $\tau_1 < \tau_2 < \dots$ Samples within each segment, $\bm X[t], t\in[\tau_{i-1}+1, \tau_i]$, are assumed to be i.i.d. and originated from an unknown distribution. In general, the distributions in two adjoining segments are considered to be different, whereas two distant segments may have a similar distribution. The primary objective of a nonparametric CPD method is to detect the change points $\tau_1, \tau_2, \dots, \tau_k$ without any prior knowledge or assumptions on the set of underlying distributions of the distinct time segments.

\textbf{A sliding window two-sample GoF test}: A common framework to detect change points is the \textit{sliding-window} approach \cite{li2015scan}. Given a window size $n$ on each side of a possible change point, an offline, unsupervised sliding window-based CPD method generally takes two adjacent time segments {\small $\{\bm Z[t -n], \bm Z[t -n +1], \dots, \bm Z[t -1]\}\sim \mu_X$} and {\small $\{\bm Z[t + 1], \bm Z[t + 2], \dots, \bm Z[n + t - 1]\}\sim \mu_Y$} and carry out a two-sample GoF test at each $t=1, 2, \dots$ 

% At any $t$, which does not belong to the set of change points, the test statistics are expected to be small, since under the null hypothesis, $\mu_X=\mu_Y$. 

% The points where the test statistics are greater than a critical value, the null hypotheses are rejected and these points can be viewed as the potential change points. 

% For any change point $\tau$, sliding-window approach creates a non-zero signature in test statistics within the region $\tau-n\leq \tau\leq \tau+ n$ with a peak at $\tau$, thus restrain oneself to depend only on a threshold-based detection scheme. 

For CPD, using the sliding window GoF test, a detection range $\delta$ is utilized. A time point $t$ is declared as a change point if the statistic $\sigma(t)$ at this point is the local maximum within $\delta$ and above a threshold $\eta$. In general, $\eta$ is specific to the statistical tests which is calculated from the distribution of the statistics under the null given a confidence level. The sliding-window based CPD procedure is described in Algorithm 1. In this context, our main contributions in this paper are to apply the recently proposed Rank-Energy {\small ($\mathsf{RE}$)} \cite{deb2021multivariate} as the GoF test for CPD, highlight its main properties and shortcomings and then propose a new test that improves upon this GoF test. 

\begin{algorithm} \caption{Sliding-window based CPD employing GoF test}
\label{alg1}
\begin{algorithmic}[1]
\Require $\bm Z[t]$: data, window size: $n$, threshold: $\eta$.
\ForEach {$t: n :(T-n)$}
% \State {\small $\mathbf D_{2n}\!\!\gets\!\! \{\bm Z[t-n], \cdots, \bm Z[t-1]\};\{\bm Z[t], \dots, \bm Z[t-n + 1]\} $}
% \State {\small $\mathbf {\widehat R}_{2n}\gets \text{OT}(\mathbf D_{n,n},\mathcal H_{2\beta}^d)$}
\State {\small $\mathbf X[t] = \{\bm Z[t-n], \dots, \bm Z[t-1]\}$}
\State {\small $\mathbf Y[t] = \{\bm Z[t], \dots, \bm Z[t-n + 1]\}$}
% \State {\small $\mathcal {\widehat{RE}}_{2n}^2(t)\gets \text{Energy}\;\big (\{\mathbf {\widehat R_{2n}}\}_{i=1}^n, \{\mathbf {\widehat R_{2n}}\}_{i=n}^{2n}\big)$}
% \State {\small $\sigma(t) \gets mn(m + n)^{-1} \mathcal {\widehat{RE}}_{2n}^2(t)$}
\State {\small $\sigma(t) \gets \text{GoF-statistic}\big(\mathbf X[t], \mathbf Y[t]$}\big)
\EndFor
\\
Output: $\{\tau_1,\tau_2, \dots\}\!\!\gets \!\!\{t | \text{max}(\sigma(t)\!>\!\eta)\}$
\end{algorithmic}
\end{algorithm}
\vspace{-6mm}

% \textcolor{blue}{In this paper we are going to use Rank-Rank tests (RE and the proposed sRE, Vs Rank-Quantile \cite{xx}}

\section{Background: Optimal Transport (OT) Based Multivariate Rank Energy Test}
\label{sec:background}

Optimal Transport, in it's most well-studied setting \cite{OTAM,peyre2019computational}, aims to find $ \mathbf{T}: \mathbb R^d \rightarrow \mathbb R^d$- a map that pushes a source distribution $\mu\in\mathcal P(\mathbb R^d)$ to a target distribution $\nu\in \mathcal P(\mathbb R^d)$ with a minimal expected squared Euclidean cost. That is, given two multivariate random variables, $\bm X\in \mathbb R^d \sim \mu$, and $\bm{Y}\in \mathbb R^d \sim \nu $, OT finds a map $\mathbf{T}$ that solves for,
{\small  
  \setlength{\abovedisplayskip}{2pt}
  \setlength{\belowdisplayskip}{\abovedisplayskip}
  \setlength{\abovedisplayshortskip}{0pt}
  \setlength{\belowdisplayshortskip}{3pt}
\begin{align}\label{monge}
   \inf_{\mathbf T} \int \|\bm x-\mathbf T(\bm x)\|^2 d\mu(\bm x) \;\; \text{subject to.}\;\; \bm{Y} = \mathbf{T}(\bm{x}) \sim \nu,
\end{align}}%
where $\|\cdot\|$ denotes the standard Euclidean norm in $\mathbb{R}^d$. Note that if $\mathbf T(\bm X) \sim \nu$ when $\bm X \sim \mu$, we write $\nu = \mathbf T_\#\mu$. In this case, the measure $\nu$ is referred to as the \textbf{\textit{push-forward}} of measure $\mu$ under the mapping $\mathbf{T}$. 

When $d=1$, it is known that the optimal map is $\mathbf{T} = \mathsf F_\nu^{-1} \circ \mathsf F_\mu$, where $\mathsf F_\mu$ and $\mathsf F_\nu$ are the (cumulative) distribution functions for $\mu$ and $\nu$, respectively \cite{OTAM, peyre2019computational}. If the target measure $\nu= \mathsf{U}[0, 1]$ is a uniform distribution on the line, then $\mathbf T = \mathsf F_\mu$, which is similar to the rank function in $1$-d. Extending this insight to the multivariate case, the notion of rank has been developed based on the following landmark result in OT theory.
% The key insight in using multivariate rank based on OT originates from the solution of the OT problem in case of $d = 1$, where the map is given by $\mathbf T = \mathsf F_\nu^{-1} \circ \mathsf F_\mu$. Here $\mathsf F_\mu$ and $\mathsf F_\nu$ are the (cumulative) distribution functions for $\mu$ and $\nu$ respectively. When $\nu = \mathsf{U}[0, 1]$, $\mathbf T = \mathsf F_\mu$, which is the rank function similar to $1$-d case.

% \textcolor{blue}{SA: Can we talk about some rank-rank and rank-quantile GoF tests here? Say that rank-rank test such as ... and rank-quantile tests have been very popular in 1-d ...} 

% Extension of the notion of rank to the multivariate case, rests on the following landmark result in OT theory.

% Following McCann's theorem \cite{mccann1995existence} stated below, \cite{deb2021multivariate}, extended the notion of rank when $d>1$. 

\begin{theorem}[McCann~\cite{mccann1995existence}] \label{Theorem 1}
Assume $\mu, \nu \in \mathcal P(\mathbb R^d)$ be absolutely continuous measures, then there exist transport maps $\mathbf{R}(\cdot)$ and $\mathbf{Q}(\cdot)$, that are gradients of real-valued $d$-variate convex functions such that $\mathbf{R}_\# \mu =\nu, \;\; \mathbf{Q}_\#\nu = \mu$, $\mathbf{R}$ and $\mathbf{Q}$ are unique and $\mathbf{R}\circ \mathbf{Q}(\bm{x}) = \bm{x}$, $\mathbf{Q}\circ \mathbf{R}(\bm{y}) = \bm{y}$.
\end{theorem}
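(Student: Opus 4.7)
The plan is to deduce the theorem from Kantorovich's relaxation of the Monge problem (\ref{monge}) combined with duality, specialised to the quadratic cost. First I would enlarge (\ref{monge}) to the search for couplings $\pi\in\Pi(\mu,\nu)$ minimising $\int\|\bm x-\bm y\|^2\,d\pi(\bm x,\bm y)$; an optimiser $\pi^*$ exists by tightness of $\Pi(\mu,\nu)$ and weak lower-semicontinuity of the cost. Since the squared-norm marginal integrals $\int\|\bm x\|^2 d\mu$ and $\int\|\bm y\|^2 d\nu$ are fixed once the second moments are finite (which one may assume without loss of generality via truncation), the problem reduces to \emph{maximising} $\int\langle\bm x,\bm y\rangle\,d\pi$, the form best suited to Brenier's argument.

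Next I would invoke Kantorovich duality to produce a pair of conjugate potentials $(\varphi,\psi)$ satisfying $\varphi(\bm x)+\psi(\bm y)\ge\langle\bm x,\bm y\rangle$ everywhere and equality $\pi^*$-a.e., and replace $\varphi$ by its convex biconjugate $\phi:=\varphi^{**}$, a proper lower semicontinuous convex function still feasible in the dual. The equality condition on $\mathrm{supp}(\pi^*)$ then translates into $\bm y\in\partial\phi(\bm x)$, i.e.\ $\mathrm{supp}(\pi^*)$ lies inside the subdifferential graph of $\phi$. I expect this to be the main obstacle: it rests on Rockafellar's theorem characterising cyclically monotone sets as subsets of subdifferentials of convex functions, plus the verification that the obtained $\phi$ is genuinely proper and not identically $+\infty$, which is where the finite second moment hypothesis is used.

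To extract a bona fide map, I would invoke the absolute continuity of $\mu$: by Rademacher/Aleksandrov the convex function $\phi$ is differentiable Lebesgue-a.e., hence $\mu$-a.e., so the set-valued subdifferential collapses to a single vector and I define $\mathbf{R}(\bm x):=\nabla\phi(\bm x)$. Because $\pi^*$ is concentrated on the graph of $\mathbf{R}$ and has second marginal $\nu$, this yields $\mathbf{R}_\#\mu=\nu$. For the reverse map I would set $\mathbf{Q}:=\nabla\phi^*$ via the Legendre conjugate; the standard convex-analytic identity $\nabla\phi^*=(\nabla\phi)^{-1}$ on points of differentiability then furnishes $\mathbf{R}\circ\mathbf{Q}$ and $\mathbf{Q}\circ\mathbf{R}$ equal to the identity on full-measure sets. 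Uniqueness of $\mathbf{R}$ (and hence of $\mathbf{Q}$) finally follows because, under absolutely continuous $\mu$, every optimal plan is concentrated on the graph of a single gradient of a convex function, so two gradient-of-convex optimisers must agree $\mu$-a.e.
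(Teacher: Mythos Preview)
The paper does not actually prove Theorem~\ref{Theorem 1}; it is quoted verbatim as a result of McCann~\cite{mccann1995existence} and serves purely as background for the definition of multivariate ranks, so there is no in-paper proof against which to compare your proposal.

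That said, your sketch is essentially the standard Brenier argument and is broadly sound, but note one point where it diverges from McCann's actual contribution. You rely on Kantorovich duality and the reduction to maximising $\int\langle\bm x,\bm y\rangle\,d\pi$, which presupposes finite second moments; you then wave this away with ``without loss of generality via truncation''. McCann's paper is precisely about removing the moment hypothesis, and his route avoids duality altogether: he works directly with cyclical monotonicity of the support of any optimal (or limiting) plan, applies Rockafellar's theorem to produce the convex potential $\phi$, and only then uses absolute continuity of $\mu$ to collapse $\partial\phi$ to $\nabla\phi$ $\mu$-a.e. Your proposal does invoke Rockafellar, but as an afterthought inside a duality framework that is not available in the generality stated. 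If you want a self-contained proof matching the hypotheses as written (absolutely continuous $\mu,\nu$ with no moment condition), the cleaner path is to drop the duality step and argue cyclical monotonicity directly, then proceed with the differentiability and Legendre-inverse steps exactly as you have them.
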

In particular, the fact that the gradients of convex functions are monotone maps \cite{mccann1995existence} has led the authors in \cite{deb2021multivariate, hallin2017distribution} to define $\mathbf{R}$ and $\mathbf{Q}$ as the multivariate rank and quantile map respectively under  appropriate selection of the target measure $\nu$. \\
Specific to this work, the authors in \cite{deb2021multivariate} use the uniform measure on the unit cube in $\mathbb R^d$ as the target measure and, developed the rank energy statistic as a GoF measure between distributions, whose sample version is stated below.\\

\textbf{Sample Multivariate Rank Energy \cite{deb2021multivariate}:} Given two sets of i.i.d. samples {\small $\{\!\bm X_1, \dots,\bm X_m \!\}\!\sim \!\mu_{X}\!\in\! \mathcal P(\mathbb R^d)$} and {\small $\{\bm Y_1, \dots,\bm Y_n\} \sim \mu_{Y}\in \mathcal P(\mathbb R^d)$}. Let {\small $\mu^{\bm X}_m \!\!=\!\!\frac{1}{m}\sum_{i=1}^m \delta_{\bm X_i}$}, {\small $\mu^{\bm Y}_n \!\!=\!\! \frac{1}{n}\sum_{i=1}^n\delta_{\bm Y_i}$} denote the empirical measures. 
A set of fixed Halton sequences \cite{chi2005optimal}, that mimic randomly chosen vectors in the unit cube in $\mathbb{R}^d$, denoted as {\small $\mathcal H_{m+n}^d:=\{\bm h_1^d, \dots, \bm h^d_{m+n}\}\subset [0, 1]^d$} with the empirical measure {\small$ \nu_{m,n}^{\mathbf{H}} = (m + n)^{-1}\sum_{ i = 1}^{ m + n} \delta_{\bm h_i}$} is taken as the target points. % where $\nu_{m,n}^{\mathbf{H}}$ weakly converges to $\mathsf{U}[0,1]^d$-\textit{the uniform measure on the unit cube in $\mathbb R^d$}. 

A joint empirical map $\mathbf {\widehat R}_{m,n}$ is computed between the joint empirical measure {\small $\mu_{m, n}^{\bm X, \bm Y}:=(m+n)^{-1}(m\mu^{\bm X}_m + n\mu^{\bm Y}_n)$} and $\nu_{m,n}^{\mathbf{H}}$ by solving the following discrete OT problem,
{\small  
  \setlength{\abovedisplayskip}{1.5pt}
  \setlength{\belowdisplayskip}{\abovedisplayskip}
  \setlength{\abovedisplayshortskip}{0pt}
  \setlength{\belowdisplayshortskip}{3pt}
\begin{align}
\label{eq:linear}
    \mathbf{\widehat P}= \arg \min_{\mathbf{P} \in \Pi} \sum_{i,j = 1}^{m+n} \mathbf{C}_{i,j} \mathbf{P}_{i,j},
\end{align}
}%
where {\small$\mathbf{C}_{i,j}$} is the squared Euclidean distance, and {\small $\Pi = \{ \mathbf{P}: \mathbf{P} \bm{1} = \frac{1}{m+n} \bm{1}, \bm{1}^\top \mathbf{P} = \frac{1}{m+n} \bm{1}^\top\}$}. The above formulation is also known as the Kantorovich relaxation \cite{peyre2019computational}. Now for any $\bm X_i$, one obtains a map as {\small $\widehat{\mathbf{R}}_{m,n}(\bm{X}_i) \!\!= \!\!\bm{h}_{\sigma(i)}$}, where $\sigma(i)$ is the non-zero index in the $i$-th row of $\widehat{\mathbf{P}}$. Given the ranks corresponding to $\bm X_i$'s and $\bm Y_j$'s, sample rank energy \cite{deb2021multivariate} is defined as, 
% (\textcolor{blue}{Let us use either $\mathsf{RE}$ or $\widehat{\mathsf{RE}}_{m,n}$??? - still not sure yet.. need to think... }):
{\small  
  \setlength{\abovedisplayskip}{3pt}
  \setlength{\belowdisplayskip}{\abovedisplayskip}
  \setlength{\abovedisplayshortskip}{0pt}
  \setlength{\belowdisplayshortskip}{3pt}
\begin{align}\label{eqn:RE}
   \mathsf{RE}\!\! &:= \!\!\frac{2}{mn}\!\! \sum_{i, j=1}^{m, n} \!\!\|\widehat{\mathbf{R}}_{m,n}(\bm X_i)\! -\!\widehat{\mathbf{R}}_{m,n}(\bm Y_j)\| \!\! - \!\!\frac{1}{m^2}\!\!\!\sum_{i, j=1}^{m}\!\! \|\widehat{\mathbf{R}}_{m,n}(\bm X_i) \notag \\&- \widehat{\mathbf{R}}_{m,n}(\bm X_j)\| -
    \frac{1}{n^2} \sum_{i, j=1}^{n} \!\!\|\widehat{\mathbf{R}}_{m,n}(\bm Y_i)\!\! -\!\! \widehat{\mathbf{R}}_{m,n}(\bm Y_j)\|.
\end{align}
}
% \textcolor{blue}{SA: This paragraph below will be much better said if you state the theorem from Deb's paper precisely. Further, let us keep in mind that the test is actually the \textit{scaled} statistic, scaled by $\frac{mn}{m + n}$.}\textcolor{red}{theorem added}.

The null hypothesis $H_0$ is rejected if {\small $mn(m+n)^{-1}\mathsf{RE}$} is greater than the threshold and accepted otherwise. As shown in \cite{deb2021multivariate}, {\small $\mathsf{RE}$} is \textit{distribution-free under the null} for fixed sample size, a property that is desirable for selecting an optimal universal threshold to reject the null hypothesis, {\small $H_0:\mu_{\bm X}=\mu_{\bm Y}$}. 

We now note several \textbf{shortcomings of directly using $\mathsf{RE}$} for CPD. 
% \textcolor{blue}{SA: We now highlight some challenges in using {\small $\mathsf{RE}$} for CPD. Some of these are theoretically well-established and some we show via a synthetic simulated set-up.} \textcolor{red}{SA: Discuss the set-up here.} 

% To test the performance of $\mathsf{RE}$ in CPD, we generate a synthetic data consists of 5 consecutive multivariate Gaussian distributions with changes either in mean or covariance with a  zero baseline of similar length on both ends. We then carry out a sliding window based CPD on the dataset with $w=250$.

% In principle, RE is merely a rank-rank test. In other words, it can be thought of as \textit{rank-transformed} energy test between two distributions. There are several drawbacks of using RE in CPD.
\vspace{-3mm}
\begin{itemize}[leftmargin=*]
\setlength {\itemsep}{-4pt} 
    % \item \textbf{Bounded:} \textcolor{blue}{SA: As shown by Lemma \ref{lemma1}, for two-sample test between two uniform distributions with bounded support,} {\small $\mathsf{RE}$} gets saturated at the maximum value when the supports of $\mu_{X}$ and $\mu_{Y}$ are disjoint. \textcolor{blue}{SA: Why is this bounded-ness property bad? This is a good point to discuss - MMD and other tests don't saturate. But for CPD as an application does it matter??} Though this property does not affect the performance in CPD, we find it crucial in other applications e.g., GAN. Moreover, this limitation of {\small $\mathsf{RE}$} wasn't mentioned in the main paper \cite{deb2021multivariate}. 

    \item \textbf{Sensitivity:} As shown from Figure \ref{toy_dataset}, we note that {\small $\mathsf{RE}$} is particularly sensitive to small shift in the mean and changes in the covariance. This characteristic may be useful in applications where it is required to detect any tiny changes. However, in many real-world datasets, these tiny changes may not be labeled as the true change points. Hence using {\small $\mathsf{RE}$} in CPD may lead to the detection of many false positives and deteriorate the overall performance.

    \item \textbf{Sample Complexity:} Curse of dimensionality -  In general case, sample complexity for reliable estimation of the sample rank map scales as $O(n^{-1/d})$ with dimension $d$ \cite{genevay2019sample}.
    
    \item \textbf{Computational complexity:} Being a linear program, the computational complexity of the OT plan for sample $\mathsf{RE}$ scales as $\mathcal O(n^3\log n)$, for a given sample size $n$, which is expensive.
\end{itemize}
\vspace{-2mm}
To alleviate these issues, in the next section, we introduce the sample soft-Rank Energy that leverages the properties of entropy regularized optimal transport \cite{peyre2019computational}.

\begin{figure*}[ht]
    \centering
    \includegraphics[width =\textwidth, height = 3cm]{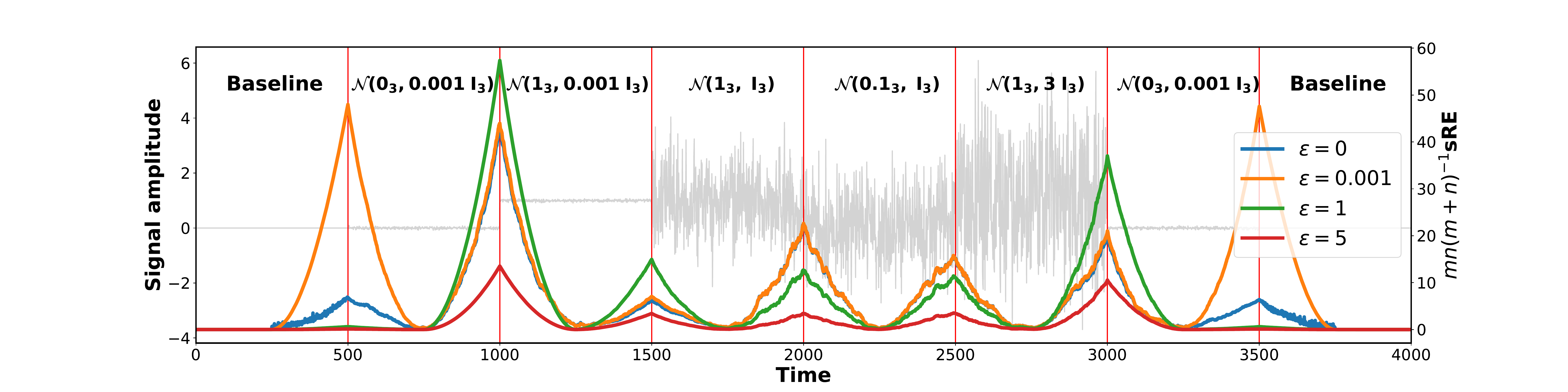}
    \vspace{-4mm}
    \caption{{\small $\mathsf{RE}$} and {\small $\mathsf{sRE}$} with $\varepsilon = \{0.001, 1, 5\}$ statistics (right axis) produced on a toy dataset using a \textit{sliding-window} CPD approach with a window size $n =250$. Dataset consists of 5 different $3$-dimensional Gaussian distributions $\mathcal N(\bm \mu_3, \sigma \mathbf{I}_3)$, with zero baselines on both ends. Here, $\mathbf I_3$ denotes the identity matrix. {\small $\mathsf{RE}(\varepsilon=0)$} and {\small $\mathsf{sRE}(\varepsilon=0.001)$} can detect the tiny changes between the baseline and $\mathcal N(0_3, 0.001\mathbf I_3)$ on both sides, whereas {\small $\mathsf{sRE}$} with {\small $\varepsilon=\{1, 5\}$} do not label these points as true change points.}
    \label{toy_dataset}
\end{figure*}
\vspace{-4mm}
\section{Proposed Sample Multivariate Soft Rank Energy}
\label{sec:proposed}
Given two sets of i.i.d. samples {\small $\{\bm X_1, \dots,\bm X_m \}\!\!\sim\!\! \mu_{X}\in \mathcal P(\mathbb R^d)$} and {\small $\{\bm Y_1, \dots,\bm Y_n\} \!\!\sim\!\! \mu_{Y}\in \mathcal P(\mathbb R^d)$}. To compute the soft rank, an entropy regularized OT  with a regularizer $\varepsilon$, is solved via Sinkhorn algorithm \cite{peyre2019computational} between the empirical joint source measure $\mu_{m, n}^{\bm X, \bm Y}$ and the reference measure $\nu_{m,n}^{\mathbf H}$, 
{\small  
  \setlength{\abovedisplayskip}{3pt}
  \setlength{\belowdisplayskip}{\abovedisplayskip}
  \setlength{\abovedisplayshortskip}{0pt}
  \setlength{\belowdisplayshortskip}{3pt}
\begin{align}
\label{empirical_sink}
    \mathbf{\widehat P}^\epsilon = \arg \min_{\mathbf{P} \in \Pi} \sum_{i,j = 1}^{m+n} \mathbf{C}_{i,j} \mathbf{P}_{i,j} - \varepsilon H(\mathbf{P}),
\end{align}
}%
where {\small$\mathbf{C}_{i,j}$} is the squared Euclidean distance, {\small $\varepsilon > 0$,  $\Pi = \{ \mathbf{P}: \mathbf{P} \bm{1} = \frac{1}{m+n} \bm{1}, \bm{1}^\top \mathbf{P} = \frac{1}{m+n} \bm{1}^\top\}$}, and {\small $H(\mathbf{P}) = - \sum_{i,j} \mathbf{P}_{i,j} \log \mathbf{P}_{i,j}$} is the entropy functional. $\mathbf {\widehat P^\varepsilon}$ is a diffused optimal plan, where the degree of diffusion increases as $\varepsilon$ increases. \\
Soft ranks are then obtained as follows. We compute a  \emph{row-normalized} plan $\mathbf{\bar P}^{\epsilon}$ via $\mathbf{\bar P}_{i,j}^\epsilon = \frac{\mathbf{\widehat P}_{i,j}^\varepsilon}{\sum_{j=1}^{m+n} \mathbf{\widehat P}_{i,j}^\varepsilon}$. Now, for any $\bm X_i$, one obtains the soft ranks via,
{\small  
  \setlength{\abovedisplayskip}{1pt}
  \setlength{\belowdisplayskip}{\abovedisplayskip}
  \setlength{\abovedisplayshortskip}{0pt}
  \setlength{\belowdisplayshortskip}{3pt}
% \begin{align}\label{softrank}
%     \widehat{\mathbf{R}}^{s,\epsilon} (\bm{X}_i) &= \sum_{j=1}^m \mathbf {\bar P^\varepsilon_{i, j}} \bm h_j = \mathbb E_{\mathbf {\bar P^\varepsilon}}[\bm h_j|\bm X_i].
% \end{align}
\begin{align}\label{softrank}
    \widehat{\mathbf{R}}^{s,\epsilon} (\bm{X}_i) &= \sum_{j=1}^{m+n} \mathbf {\bar P^\varepsilon_{i, j}} \bm h_j = \mathbb E_{\mathbf {\widehat P^\varepsilon}}[\bm h_j|\bm X_i].
\end{align}
}%
% Since {\small $\nu_{m,n}^{\mathbf{H}}\overset{w}{\longrightarrow} \mathsf{U}[0,1]^d$, $\widehat{\mathbf{R}}^{s,\epsilon} (\bm{X}_i)\overset{w}{\longrightarrow}\mathbb E_{\mathbf {\bar P^\varepsilon}}[\mathsf{U}|\bm X_i]$}. 

In other words, soft ranks are the conditional expectation of Halton sequences under the joint distribution $\mathbf{\widehat P}^\varepsilon$ given the source samples. \\
Given the soft ranks corresponding to $\bm X_i$'s and $\bm Y_j$'s, sample soft rank energy is defined using the same formulation as in Equation \eqref{eqn:RE}:  
{\small  
  \setlength{\abovedisplayskip}{3pt}
  \setlength{\belowdisplayskip}{\abovedisplayskip}
  \setlength{\abovedisplayshortskip}{0pt}
  \setlength{\belowdisplayshortskip}{3pt}
\begin{align}
   \mathsf{sRE}\!\! &:= \!\!\frac{2}{mn}\!\! \sum_{i, j=1}^{m, n} \!\!\|\widehat{\mathbf{R}}^{s,\epsilon}(\bm X_i)\! -\!\widehat{\mathbf{R}}^{s,\epsilon}(\bm Y_j)\| \!\! - \!\!\frac{1}{m^2} \sum_{i, j=1}^{m} \|\widehat{\mathbf{R}}^{s,\epsilon}(\bm X_i) \notag \\&- \widehat{\mathbf{R}}^{s,\epsilon}(\bm X_j)\| -
    \frac{1}{n^2} \sum_{i, j=1}^{n} \|\widehat{\mathbf{R}}^{s,\epsilon}(\bm Y_i) - \widehat{\mathbf{R}}^{s,\epsilon}(\bm Y_j)\|.
\end{align}
}%
The null hypothesis $H_0$ is rejected if {\small $mn(m+n)^{-1}\mathsf{sRE}$} is greater than the threshold and accepted otherwise. We note the following result relating $\mathsf{sRE}$ and $\mathsf{RE}$.
\begin{proposition}\label{proposition1}
Soft rank energy $\mathsf{sRE}$ will converge to rank energy $\mathsf{ RE}$ as $\varepsilon\rightarrow 0$.
\end{proposition}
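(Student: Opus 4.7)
The plan is to push the limit $\varepsilon\to 0^{+}$ through three layers: first from the entropy-regularized plan $\mathbf{\widehat P}^{\varepsilon}$ to the unregularized plan $\mathbf{\widehat P}$, next from there through row-normalization to the soft ranks, and finally through the (continuous) rank-energy functional.

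First, I would invoke the classical convergence result for entropy-regularized OT (Cominetti--San Mart\'in; see also Ch.~4 of \cite{peyre2019computational}) to argue that the Sinkhorn solution $\mathbf{\widehat P}^{\varepsilon}$ of \eqref{empirical_sink} converges entrywise to the unregularized LP solution $\mathbf{\widehat P}$ of \eqref{eq:linear}. Since $\mu_X,\mu_Y$ are absolutely continuous, the assignment is almost surely unique and $\mathbf{\widehat P}=\tfrac{1}{m+n}\mathbf{\Sigma}$ for a permutation matrix $\mathbf{\Sigma}$ encoding exactly the index map $\sigma(i)$ used in defining $\widehat{\mathbf R}_{m,n}$.

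Next, because the marginal constraint forces every row of $\mathbf{\widehat P}^{\varepsilon}$ to sum to $\tfrac{1}{m+n}>0$, the row-normalization $\mathbf P\mapsto\mathbf{\bar P}$ is continuous on the admissible set, so $\mathbf{\bar P}^{\varepsilon}\to \mathbf{\bar P}$, the $0/1$ permutation matrix with a $1$ in column $\sigma(i)$ of row $i$. Then \eqref{softrank} gives
\begin{equation*}
\widehat{\mathbf R}^{s,\varepsilon}(\bm X_i) \;=\; \sum_{j=1}^{m+n}\mathbf{\bar P}^{\varepsilon}_{i,j}\,\bm h_j \;\longrightarrow\; \bm h_{\sigma(i)} \;=\; \widehat{\mathbf R}_{m,n}(\bm X_i),
\end{equation*}
and analogously for each $\widehat{\mathbf R}^{s,\varepsilon}(\bm Y_j)$. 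Finally, $\mathsf{sRE}$ is the same fixed continuous (in fact Lipschitz on bounded sets) function of these rank vectors as $\mathsf{RE}$ is of the hard ranks in \eqref{eqn:RE}, namely a finite sum of Euclidean norms of differences; so the continuous mapping theorem delivers $\mathsf{sRE}\to\mathsf{RE}$.

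The hard part will be the first step. Entropic OT plans are known to converge to a distinguished LP optimum as $\varepsilon\to 0^{+}$, but when the LP minimizer is non-unique (possible if the cost matrix $\mathbf C$ has ties) Sinkhorn selects the maximum-entropy optimum, and one must still check that this limit is consistent with the permutation structure assumed in the definition of $\mathsf{RE}$. Under the paper's absolute-continuity assumption on $\mu_X$ and $\mu_Y$, such ties occur with probability zero and the LP optimum is almost surely a unique permutation, which removes this subtlety and closes the argument.
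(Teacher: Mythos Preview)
Your proposal is correct and follows essentially the same three-step route as the paper: first invoke convergence of the entropy-regularized plan to the unregularized one (the paper cites \cite{carlier2017convergence} and phrases it as weak convergence $\mathbf{P}^{\varepsilon}\rightharpoonup\mathbf{P}$, which in this finite-support setting is the same as your entrywise convergence), then pass through row-normalization to get convergence of the soft ranks to the hard ranks, and finally use continuity of the energy functional. Your version is in fact more careful than the paper's, since you explicitly justify the continuity of row-normalization via the fixed row sums and flag the LP-uniqueness issue, resolving it through the absolute-continuity assumption; the paper's proof glosses over both points.
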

\begin{proof}
The unique minimizer $\mathbf{P^\varepsilon}$ of Equation \eqref{empirical_sink} converges to the optimal solution $\mathbf{P}$ (Equation \ref{eq:linear}) with a cost function $\mathbf C_{i,j}=\|\bm X_i-\bm h_j\|^2$ \cite{carlier2017convergence}, 
% \textcolor{blue}{SA: Why do you need the Equation below? You only need equation (7) - right?}
{\small  
  \setlength{\abovedisplayskip}{3pt}
  \setlength{\belowdisplayskip}{\abovedisplayskip}
  \setlength{\abovedisplayshortskip}{0pt}
  \setlength{\belowdisplayshortskip}{3pt}
\begin{align}\label{eq:plan_converge}
%  \lim_{\varepsilon\rightarrow 0}\min_{\mathbf{P} \in \Pi} \sum_{i,j = 1}^{m} \mathbf{C}_{i,j} \mathbf{P}_{i,j} &- \varepsilon H(\mathbf{P}) = \min_{\mathbf{P} \in \Pi} \sum_{i,j = 1}^{m} \mathbf{C}_{i,j} \mathbf{P}_{i,j}\notag\\ 
 \mathbf{P^\varepsilon}\rightharpoonup \mathbf{P},
\end{align}
}%
\textcolor{black}{where $\rightharpoonup$ denotes convergence w.r.t. weak topology.}
Let $\bar{\mathbf P}$ denote the \textit{row-normalized} $\mathbf P$. Then Equation \eqref{eq:plan_converge} implies that {\small $\lim_{\varepsilon\rightarrow 0}\mathbf{\bar P^\varepsilon}\rightharpoonup \mathbf{\bar P}$}, and {\small $\widehat{\mathbf{R}}^{s,\epsilon}(\bm X_i)\rightharpoonup\widehat{\mathbf{R}}_{m,n}(\bm{X}_i)$}. Therefore, {\small $\lim_{\varepsilon\rightarrow 0}\mathsf{sRE} = \mathsf{ RE}$}.

% Therefore, {\small $\lim_{\varepsilon\rightarrow 0}\mathsf{sRE} \rightharpoonup \mathsf{ RE}$}.
\end{proof}
We note the following \textbf{properties of $\mathsf{sRE}$ that help alleviate the issues in directly using $\mathsf{RE}$ for CPD}. 
\vspace{-1mm}
\begin{itemize}[leftmargin=*]
\setlength \itemsep{-4pt}
    \item Proposition \eqref{proposition1} implies that {\small$\mathsf{sRE}$} will be nearly distribution-free for small enough $\varepsilon$.
    % \textcolor{blue}{SA: Need to reword this slightly. What do we mean by approximately distribution free?}
    
    \item \textbf{Sensitivity}: As shown in Figure \ref{toy_dataset}, {\small $\mathsf{sRE}$} is sensitive to small changes for $\varepsilon= 0.001$. For $\varepsilon = 1$, sensitivity decreases. However, {\small $\mathsf{sRE}$} still generates visible peaks at all the change points except the transition between the baseline and Gaussian distribution with tiny covariance. With $\varepsilon=5$, {\small $\mathsf{sRE}$} shows the least sensitivity with barely visible peaks at the change points. The entropic regularization parameter thus allows control of the degree of sensitivity that can be adapted or adjusted to control the false alarms. A good choice for CPD will be the  $\varepsilon$, for which {\small $\mathsf{sRE}$} is neither too sensitive nor totally unresponsive to changes.
    % \textcolor{red}{Professor, kindly take a look at the above rewording}
    
    % \textcolor{blue}{SA: Why is this expected?} Since the soft ranks are the weighted average of the reference points, where the weights are the normalized rows of a diffused map (Equation \ref{softrank}), they do not change drastically in response to tiny changes in shift or covariance for higher $\varepsilon$. The entropic regularization parameter allows control of degree of sensitivity that can be adapted or adjusted to control the false alarms eventually.
    %  \textcolor{blue}{SA: WE can say that the entropic regularization parameter allows control of degree of sensitivity that can be adapted or adjusted to control the false alarms.}

    % \item  In general the sample complexity of entropic OT $\mathcal O(n^{-1/2})$ when $\varepsilon\rightarrow \infty$. 
    
    \item \textbf{Sample complexity}: Under some mild assumptions, namely, sub-Gaussianity of the measures, the estimation of the entropic optimal transport does not suffer from the curse of dimensionality for a sufficiently large $\varepsilon$ \cite{genevay2019sample}.
    
    \item \textbf{Computational complexity}: For a sample size $n$, the computational complexity of entropic optimal transport is {\small $\mathcal O(\varepsilon^{-2} n^2 \log\, n\|\mathbf C\|_\infty^2)$} \cite{peyre2019computational}. The smaller the $\varepsilon$, the costlier the computation. 
\end{itemize}

\begin{table*}[ht]
\centering
\setlength\tabcolsep{1.5pt}
\begin{tabular}{c|ccc||ccc}
\hline
      &              & CP-AUC   &          &              & CP-F1    &          \\
      &{\small \textbf{HASC-PAC2016}} & {\small \textbf{HASC2011}} & {\small \textbf{Beedance}} & {\small \textbf{HASC-PAC2016}} & {\small \textbf{HASC2011}} & {\small \textbf{Beedance}}  \\ \hline \rule{0pt}{3ex}
\textbf{W2T (Rank-Quantile), $d=1$ \cite{cheng2020optimal}}   & 0.689        & 0.576    & 0.721    & 0.748        & \textbf{0.824}    & 0.742    \\
\textbf{M-stat (IPM-based), $d\geq 1$, \cite{li2015scan}}  & 0.658        & 0.585    & 0.727    & 0.713        & 0.770    & 0.725   \\
% KL-CPD & 0.708 &0.649 & 0.676 & 0.758  & 0.806 & 0.754 \\
\textbf{RE (Rank-Rank) $d \geq 1$, \cite{deb2021multivariate}}    & 0.718       & 0.529    & 0.694    & 0.631       & 0.643    &  0.672       \\
\textbf{sRE (soft Rank-Rank [This paper], $d \geq 1$}   & \textbf{0.747}        & \textbf{0.670}    & \textbf{0.739}    & \textbf{0.785}       & 0.796    & \textbf{0.745}       \\ \hline
\end{tabular}
\caption{Comparison between the proposed method and related state of art in literature.}
\vspace{-5mm}
\label{tab:result}
\end{table*}
% \textcolor{blue}{check again the F1-score of KLCPD}
\vspace{-5mm}
\section{Numerical Experiments \& Simulations}
\label{sec:evaluation}

% \textcolor{red}{SA: Can we numerically verify or show that $\mathsf{sRE}$ is more sample efficient compared to $\mathsf{RE}$? Perhaps one way to do this would be to have a very large number of change points with rapid switching. This would mean picking a smaller window size. Something to think about, since we are claiming that the $\mathsf{sRE}$ may be more sample efficient.}

% We evaluate the performances of $\mathsf{RE}$ and $\mathsf{sRE}$ based CPD in terms of two widely used performance metric, namely, Area Under the Curve (AUC) and F1-score \cite{cheng2020matched}.
% To compute F1-score in an offline multiple CPD setting, we use a detection margin $\delta$ as the offset to the ground truth label.
\textbf{\underline{Experimental Setup:}} We compare the performances of our methods with two other existing algorithms, the univariate distribution-free Wasserstein two-sample test (W2T) \cite{cheng2020optimal} based CPD and the multivariate M-Statistic (MStat) \cite{li2015scan} based CPD that uses Maximum-Mean Discrepancy (MMD) \cite{gretton2012kernel} for measuring GoF. The Gaussian kernel with unit variance is used to compute MStat. 

For a fair comparison, we apply the  optimal matched filter proposed in \cite{cheng2020optimal} on W2T and MStat that improves the performances of these methods significantly. It is to be noted that no smoothing filter was applied on {\small $\mathsf{RE}$} and {\small $\mathsf{sRE}$} statistics. 

The hyperparameters we use in the CPD algorithm are the entropic regularizer $\varepsilon$, and the detection threshold $\eta$ to compute the F1 score. To compare the methods on an equal footing, we use the same window size $n$ and detection range $\delta$ for all the methods. The optimal $\eta$, $\varepsilon$ selected for the proposed methods, window size $n$, and the detection range $\delta$ along with the specifications of the used datasets can be found in Table \ref{tab:real_data}. It is worthwhile to note that, since the Beedance dataset has a comparatively shorter sequence, we padded it with a zero sequence of length $n$ on both ends.
\vspace{-3mm}
\begin{table}[ht]
\centering
\setlength\tabcolsep{1.2pt}
\begin{tabular}{cccc}
                  &{\small \textbf{HASC2011}\cite{ichino2016hasc}} & {\small \textbf{HASC-PAC2016}\cite{ichino2016hasc}} & {\small \textbf{Beedance}\cite{oh2008learning}} \\\hline
{\small domain}&$\mathbb R^3$&$\mathbb R^3$&$\mathbb R^3$\\
{\small $\#$ subjects} & 2                 & 10                    & 6                 \\
{\small $\#$actions}&6                &6                      & 3                  \\
{\small$\#$ CP}          & 65                & 13                    & 19                \\
$n$               & 500               & 500                   & 50                \\
$\varepsilon$     &2                  &1                      &1\\
$\delta$          & 250               & 250                   & 10 \\
$\eta$&  0.52 & 0.52 & 0.25
\end{tabular}
\caption{{\small $\#$} denotes the total number and CP is for change points.}
\vspace{-3mm}
\label{tab:real_data}
\end{table}

\textbf{\underline{Result on real data}}: 
Table \ref{tab:result} demonstrates the performance comparison of the proposed methods with MStat \cite{li2015scan} and W2T \cite{cheng2020optimal}. The proposed methods demonstrate robust results for CPD under the AUC metric. {\small $\mathsf{RE}$} gains higher AUC compared to W2T and MStat on HASC-PAC2016 dataset but fails to outperform on HASC2011 and Beedance datasets. {\small$\mathsf{RE}$}'s highly sensitive nature to tiny changes brings about many false alarms, thus explains the lower AUC for these datasets. On the other hand, {\small$\mathsf{sRE}$} outperforms all the methods on all three datasets in terms of AUC score. To be noted here, we observe significant improvement of the AUC using W2T and MStat on the Beedance dataset after the inclusion of zero-padding on both ends.

Though under the AUC metric, {\small$\mathsf{RE}$} shows a comparable result, we observe lower F1-scores compared to W2T and MStat on all three datasets. Since we did not apply any filter to smooth out the {\small$\mathsf{RE}$} statistics, several spurious maxima exist outside of the detection range $\delta$ on both sides of the peaks. Moreover, {\small$\mathsf{RE}$} also produces a lot of false alarms because of its higher sensitivity to small changes. As a result, {\small$\mathsf{RE}$} achieves slightly lower F1-scores on all three datasets. On the other hand, {\small$\mathsf{sRE}$} achieves either higher or comparable F1-scores in all three datasets. On HASC-PAC2016 and Beedance datasets, {\small$\mathsf{sRE}$} achieves the highest F1-score. W2T-based CPD achieves the maximum F1-score on HASC2011, whereas {\small$\mathsf{sRE}$}  performs comparably.

We also compare the performance of {\small $\mathsf{sRE}$} to the method called \textbf{KL-CPD} \cite{chang2019kernel}, which is a \textit{semi-supervised} CPD method. The best AUC is achieved by \textbf{KL-CPD} which is \textit{a kernel-based semi-supervised method trained by a deep generative model}. \textbf{KL-CPD} achieves AUC of \textbf{0.677} and \textbf{0.649} on the Beedance and HASC2011 dataset, respectively, which is clearly lower than the AUC scores obtained by the proposed {\small $\mathsf{sRE}$}.

% Figure \eqref{fig:real} indicates that both RE and sRE are able to detect all the change points in a real-world dataset. In Table \ref{tab:result}, we demonstrate the performance comparison of the proposed methods with KL-CPD \cite{chang2019kernel}, M-Statistic \cite{li2015scan} and Wasserstein two-sample test (W2T) \cite{cheng2020optimal}. Utilizing RE in CPD significantly improve the AUC and F1-score over  existing methods on HASC-PAC2016 and Beedance dataset. We observe slightly lower AUC on the HASC2011 dataset compared to other methods. In terms of F1 score, we see similar performance of RE again with the only exception for HASC-2011 dataset. RE achieves a perfect F1-score on Beedance dataset. sRE, on the other hand, achieves maximum AUC and F1-score on all datasets. Similar to RE, for Beedance data, sRE also achieves a perfect F1-score.

% One of the drawbacks of our proposed method is its computational complexity. Computation of entropic optimal transport itself an expensive procedure $\mathcal O(\varepsilon^{-2} \beta^2 \log\, \beta\|\mathbf C\|_\infty^2$, whereas computational complexity of W2T   is  $\mathcal O(\beta\, \log \,\beta)$  and M-Stat is $\mathcal O (\beta^2)$.

\section{Conclusion and Future work}
\label{sec:foot}
In this paper, we employ recently developed multivariate GoF statistics to detect change points in an unsupervised, offline approach. We also propose a new statistic that depends on a regularization parameter which allows control of the degree of sensitivity. With an appropriate regularizer, we have shown that our proposed statistic lowers the false positive rate, hence outperforms state of the art in CPD under the AUC and F1-score metric. Future work will investigate theoretical properties of $\mathsf{sRE}$ and explain the smoothing effect in CPD as a function of the entropic regularization.

\clearpage
\section{Acknowledgement}
This research was sponsored by the U.S. Army DEVCOM Soldier Center, and was accomplished under Cooperative Agreement Number W911QY-19-2-0003. The views and conclusions contained in this document are those of the authors and should not be interpreted as representing the official policies, either expressed or implied, of the U.S. Army DEVCOM Soldier Center, or the U.S. Government. The U. S. Government is authorized to reproduce and distribute reprints for Government purposes notwithstanding any copyright notation hereon.

We also acknowledge support from the U.S. National Science Foundation under award HDR-1934553 for the Tufts T-TRIPODS Institute.  Shuchin Aeron is also supported in part by NSF CCF:1553075, NSF RAISE 1931978, NSF ERC planning 1937057, and AFOSR FA9550-18-1-0465.
\bibliographystyle{IEEEbib}
% \bibliography{strings,refs}
\bibliography{ref}

\end{document}